\newtheorem{corollary}{Corollary}
\title{TensorCoder: {Dimension-Wise}  Attention via Tensor Representation for Natural Language Modeling}
\author{
  {Shuai Zhang}\textsuperscript{\rm 1}, 
  {Peng Zhang}{\textsuperscript{\rm 1}}{\thanks{Corresponding Author: Peng Zhang}}~~, 
  {Xindian Ma}\textsuperscript{\rm 1},\\
  {\textbf{Junqiu Wei}}\textsuperscript{\rm 2}\textbf{,} 
  {\textbf{Ningning Wang}}\textsuperscript{\rm 1}\textbf{,}  
  {\textbf{Qun Liu}}\textsuperscript{\rm 2}\\
  \textsuperscript{\rm 1}College of Intelligence and Computing, Tianjin University, Tianjin, China\\
  \textsuperscript{\rm 2}Huawei Noah’s Ark Lab, China\\
  \{szhang96, pzhang, xindianma, {w\_ning1215}\}@tju.edu.cn\\
  \{weijunqiu, qun.liu\}@huawei.com  \\}
\begin{document}

\maketitle

\begin{abstract}
Transformer has been widely-used in many Natural Language Processing (NLP) tasks and the scaled dot-product attention between tokens is  a core module of Transformer. This attention is a token-wise design and its complexity is quadratic to the length of sequence, limiting its application potential for long sequence tasks. In this paper, we propose a dimension-wise attention mechanism based on which a novel language modeling approach (namely TensorCoder) can be developed. The dimension-wise attention can reduce the attention complexity from the original $O(N^2d)$ to $O(Nd^2)$, where $N$ is the length of the sequence and $d$ is the dimensionality of head. We verify TensorCoder on two tasks including masked language modeling and neural machine translation. Compared with the original Transformer, TensorCoder not only greatly reduces the calculation of the original model but also obtains improved performance on masked language modeling task (in PTB dataset) and comparable performance on machine translation tasks.

\end{abstract}

\section{Introduction}
In Natural Language Processing(NLP), sequence modeling which embeds a sequence of tokens into continuous vector space is important in language modeling~\cite{sutskever2014sequence} and other downstream tasks~\cite{ramachandran2017unsupervised,zhou2015answer}. In the early work, recurrent neural networks (RNNs, including  LSTMs and GRUs) are often used for sequence learning in many works~\cite{peters2018deep,luong2013better,sundermeyer2012lstm}. Researchers are constantly committed to encoding sequences by 
a longer context. Along with the Transformers~\cite{vaswani2017attention} and their derived pre-trained language models~\cite{devlin2018bert,lan2020albert} emerging, breakthroughs have been made in various NLP tasks. In pursuit of better results,  multi-layers Transformers have been usually adopted but inevitably requires huge computation resource and time consumption~\cite{al-rfou2019character, zhang2019improving}. 
% And in long text modeling task, it also causes a lot of time consumption. 
%Based on the analysis of the Transformer structure, we suggest solving the following two problems would help to improve  existing pre-trained models.

A core module of Transformer is its token-wise attention, which is also one of the main  obstacles to long-sequence modeling. The  complexity of the token-wise attention is $O(N^2d)$, where $N$ is the length of a sequence and $d$ is the dimensionality of head in Transformer. As $N$ increases, both the resource consumption and training time  increase quadratically. In practice, $N$ can be very large. For example,  Kazuki et al.~\cite{irie2019language}  models the longest sentence length of 1343, and Marcin et al.~\cite{junczys-dowmunt2019microsoft} creates sequences of up to 1000 subwords to train Transformer translation models. 

% Second, when building the hidden states of a sequence in a transformer, the attention scores are computed between tokens first, and then the token attention scores are used to encode the hidden states of these tokens. We think that it is not enough to only use token attention scores to build the hidden state of tokens. And the token attention scores are sparse and low in long sequence. The dimension attention score of tokens embedding is also important to encode token. The dimension attention score represents a latent semantic space between tokens based on dimensionality reduction idea. It also removes the noise of attention information. 
Our basic observation is that the dimensionality $d$ can be much smaller than the sequence length $N$. If we can reduce the quadratic complexity to linear complexity by using dimension-wise attention in place of token-wise attention, then we can derive a lighter attention mechanism. To achieve this goal, there are a series of challenging problems. First, how to build such a dimension-wise attention matrix which is linear to $N$ and still has its usefulness in language dependency modeling. Second, how to still 
% encode and retain 
retain the same output form of token-wise attention, which has been proved useful in many earlier studies, 
in such a dimension-wise design. Third, how to 
% extract both the token-wise and dimension-wise features 
extract more effective features in this new attention, and then naturally integrate such a feature extraction process in the Transformer architecture. Finally, how to build a masked language model mechanism, which can prevent the leftward information flow. 

In this paper, we propose a TensorCoder, which is a unified framework that can solve the above problems, based on a novel dimension-wise attention matrix and the tensor representation and operators. First, we can change the multiplication of the query matrix ${\boldsymbol{Q}}$ and value matrix $\boldsymbol{K}$, form $\boldsymbol{QK}^{T}$ (in the token-wise attention) to $\boldsymbol{Q}^{T}\boldsymbol{K}$, yielding a dimension-wise attention matrix, which is linear to $N$ and quadratic to $d$. We also explain this idea using the covariance matrix, which encodes the global dependencies among dimensions, like some latent space models, e.g., principal component analysis (PCA) and latent semantic analysis (LSA), does. In literature, there are  some Transformer variants~\cite{Rewon2019sparse,kitaev2020reformer} which reduce the attention's complexity. However, these variants still focus on the token-wise attention, and have not achieved a linear complexity in the sequence length. 

%Second, we replace token-wise attention in transformer with dimension-wise attention. The token-wise attention use attention information between tokens, but the token attention scores are sparse and low in long sequence. Based on dimensionality reduction idea of Latent semantic analysis(LSI), the dimension-wise attention focuses on the dimension attention score of tokens embedding. The dimension attention score represents a latent semantic space between tokens. It can remove the noise of attention information. There are already some transformer variants~\cite{Rewon2019sparse,kitaev2020reformer} to reduce the complexity of attention. But these transformer variants don't achieve a linearity in sequence length and only focus on the attention information between tokens.
% \textcolor{red}{/*This paragraph is a bit hard to understand. I suggest you directly proposed our idea "replace the token attention in transformers with element-wise attentions", and then explain the reason behind the idea.  It would make it easier to understand.*/}

%The complexity of dimension-wise attention is $O(Nd^2)$, which is linear to the sequence length. It can alleviate the problem of transformer that resource consumption and training time is a quadratic increase in the long-sequence modeling task.

% In order to take into account the token-wise  attention information,
In order to take into account the output form of the token-wise attenion and obtain more dimensional interactive information, we use \textit{Khatri-Rao} (KR) product tensor operation~\cite{li2013some} and construct a third-order tensor representation, which can be proved to have two kinds of final output (implicit representation and explicit representation).
% One of them is same as the token-wise attention that .  
% which can be proved to encode not only explicit attention information between tokens, but also implicit attention information between dimensions of token embeddings. 
Then, we use the convolution method to extract the two kinds of output information from such a third-order tensor, and this convolution method can be naturally integrated into the Transformer mechanism. 
 
 When constructing the dimension-wise attention information of a sequence, the dimension-wise attention uses all the tokens of the entire sequence at the same time. However, in the decoder structure, we should prevent leftward information flow to preserve the auto-regressive property, which is of great necessity for a proper language modeling approach.  We then introduce a masked dimension-wise attention, which ensures that the prediction for the current position $i$ can depend only on the known outputs at positions less than $i$. Such a masked attention, indeed, is also represented and implemented via tensors in our unified TensorCoder framework. 

%Furthermore, when applying TensorCoder to natural language processing tasks(i.e., neural machine translation), the mask mechanism of decoder structure is also an important issue. In order to make dimension-wise attention be used in the decoder structure, we also introduce masked dimension-wise attention based on mask mechanism.
      
% \begin{comment}
% Our major contributions of this paper are as follows:
% \begin{itemize}\setlength{\itemsep}{0pt}
%   \item[1)] We propose a dimension-wise attention mechanism to replace the token-wise attention mechanism in Transformers, and implement a language model named TensorCoder based on the proposed dimension-wise attention mechanism. 
%   \item[2)] Compared to token-wise attention in the standard Transformer, the complexity of dimension-wise attention is $O(Nd^2)$, where $N$ is the sequence length. When $N$ increases, the complexity of dimension-wise attention increases linearly rather than quadratically.
%   \item[3)] In TensorCoder, the third-order tensor constructed by our method is equivalent to the token-wise attention by sum operation, and it also has implicit attention between dimensions of token embedding. In other word, it can simultaneously model the explicit and implicit attention information of the text sequence.
% \end{itemize}
% \end{comment}

To validate the benefits of our model, we test it on two NLP tasks, namely masked language modeling and neural machine translation. In our experiments, the token-wise attention is replaced by the dimension-wise attention. As a result, we show that compared with transformer-based models, TensorCoder considerably reduces the time complexity, and also achieves promising experimental results, especially in 
masked language modeling tasks. In PTB dataset, TensorCoder has achieved a 13\% performance improvement, and its  time cost is only 1/3 of the multi-head token-wise attention. In WMT16 dataset, TensorCoder has achieved comparable results, and the FLOPs of TensorCoder attention are 1/3.5 of the multi-head token-wise attention of Transformer.
% \textcolor{red}{please add more detailed results later}

\section{Preliminaries} 
\label{previous:work}
% The TensorCoder is carried out in this paper. 
% The analysis of dimension-wise attention relies on these concepts about Khatri-Rao Product and Attention mechanism. 
% The analysis of Multi-linear attention is carried out in this paper rely on concepts and results from the filed of tensor decomposition and multi-head attention.
% We cover below in Section~\ref{KR-product} basic background on Khatri-Rao Product~\cite{li2013some}. Then, we describe dot-product attention and  multi-head attention~\cite{vaswani2017attention} in Section~\ref{self-attention}.

% \subsection{Tensor and \textit{Khatri-Rao Product}}
\subsection{Tensor and Khatri-Rao Product}
\label{KR-product}
\textbf{Tensor}  Tensor $\boldsymbol{\mathcal{A}}$ can be thought of as a multi-dimensional array. The order of a tensor is defined to be the number of indexing entries in the array, which are referred to as modes~\cite{zhang2019a}.
% The vector and  matrix also  are called $1$-order tensor and $2$-order tensor, respectively. 
We use a $3$-order tensor that has three modes in the following text. It has column (mode-1), row (mode-2), and tube (mode-3) fibers, denoted by $\boldsymbol{\mathcal{A}}_{:jk}$, $\boldsymbol{\mathcal{A}}_{i:k}$, $\boldsymbol{\mathcal{A}}_{ij:}$ respectively~\cite{li2013some}. In the geometric representation of a tensor, $3$-order tensor can be represented by a cube.

\noindent\textbf{Khatri-Rao Product (KR)} The \textit{Khatri-Rao Product}~\cite{li2013some} is the "matching columnwise" kronecker product. Given matrices $\boldsymbol{M} \in \mathbb{R}^{i \times k}$ and $\boldsymbol{N} \in \mathbb{R}^{j \times k}$, their KR product is denoted by $\boldsymbol{M} \odot \boldsymbol{N}$. 
% It is important to emphasize that the two matrices must have the same dimensions. 
The result is a matrix defined by
\begin{equation}
\label{KR-eq}
 \boldsymbol{\mathcal{A}}=\boldsymbol{M} \odot \boldsymbol{N} = {[\boldsymbol{m}_{1} \otimes
 \boldsymbol{n}_{1}, \boldsymbol{m}_{2} \otimes \boldsymbol{n}_{2}, \ldots, \boldsymbol{m}_{k} \otimes \boldsymbol{n}_{k}]},
\end{equation} 

where $\mathcal{A}\in \mathbb{R}^{(ij) \times k}$, $\boldsymbol{m}_d$ and $\boldsymbol{n}_d$ are the $d$-th column of the matrices $\boldsymbol{M}$ and $\boldsymbol{N}$, respectively, and $\otimes$ is Tensor Product. 
% The \boldsymbol{$m_d \otimes n_d$} is defined by 
% \begin{equation}
% \label{KR-eq}
%  \boldsymbol{m_d \otimes n_d} =
% \begin{pmatrix}
%     m_{1d}n_d \\
%     a_{2d}n_d \\
%     \vdots \\
%     a_{Id}n_d \\
% \end{pmatrix}=
% \begin{pmatrix}
%     m_{1d}n_{1d} \\
%     m_{1d}n_{2d} \\
%     \vdots \\
%     m_{Id}n_{J-1,d} \\
%     m_{Id}n_{Jd}\\
% \end{pmatrix}
% \end{equation}

\subsection{Attention}
\label{self-attention}
\noindent\textbf{Scaled Dot-Product Attention (Token-Wise Attention)} 
% Self-attention is an attention mechanism relating different positions of a single sequence in order to compute a representation of the sequence. 
In Transformer~\cite{vaswani2017attention}, it uses a particular self-attention, called Scaled Dot-Product Attention. The input of attention consists of queries matrix $\boldsymbol{Q}$, keys matrix $\boldsymbol{K}$, and values matrix $\boldsymbol{V}$. The attention can be written as follows:
\begin{equation}
\label{attention}
Attention(\boldsymbol{Q,K,V}) = softmax(\frac{\boldsymbol{QK}^{T}}{\sqrt{d_k}})\boldsymbol{V}
\end{equation}
where matrices $\boldsymbol{Q}$, $\boldsymbol{K}
% \in\mathbb{R}^{N\times{d}}$ 
, \boldsymbol{V} \in \mathbb{R}^{N\times{d}}$, and $d$ is the dimensionality of head. The token-wise attention computes the dot products of the query with all keys, where the time complexity is quadratic in the sequence length. It focuses on the weights between tokens and then applies a softmax function to obtain the weights on the values.

\noindent\textbf{Multi-Head Attention} The Transformer also uses the multi-head attention to allow the model to jointly attend to information from a different representation
subspaces, as introduced in~\cite{vaswani2017attention}:
\begin{equation}
\begin{aligned}
MultiHead(\boldsymbol{Q,K,V}) &=
 Concat(head_1,\ldots,head_k){\boldsymbol{W}^{O}}\\
where~head_i &= Attention({\boldsymbol{QW}^{Q}_{i}, \boldsymbol{KW}^{K}_{i},\boldsymbol{VW}^{V}_{i}})
\end{aligned}
\label{muti-head-attention}
\end{equation}
where matrices ${\boldsymbol{W}^{Q}_{i}}$, ${\boldsymbol{W}^{K}_{i}}$, ${\boldsymbol{W}_{i}^{V}} \in \mathbb{R}^{d_{model}\times{d}}$, ${\boldsymbol{W}^O} \in \mathbb{R}^{hd\times d_{model}}$, $h$ is the number of heads, and $d_{model} = h \times d$ is the dimensionality of model. 

\section{Model Architecture}
In this section, we introduce TensorCoder, which mainly consists of four parts: the dimension-wise attention matrix between dimensions of token embedding, the KR product tensor representation in Figure~\ref{Model}(left), the convolution-based feature extraction in Figure~\ref{Model}(right), and the masked dimension-wise attention in Figure~\ref{mask}.
\begin{figure}[tbp]\small
\centering
\includegraphics[width=4.0in]{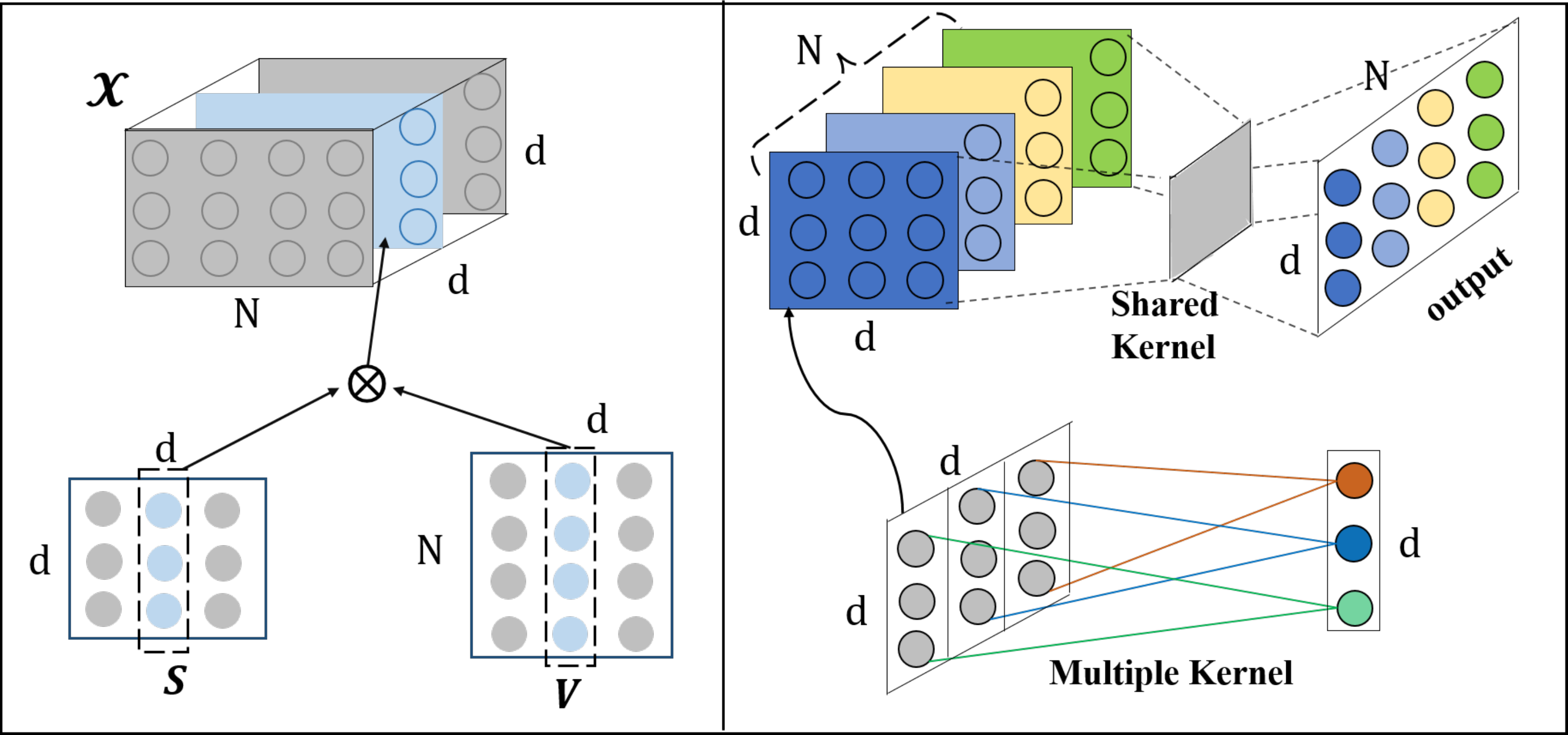}
\caption{(left) Tensor representation using KR product. (right) Feature extraction using convolution.}
\label{Model}
% \vspace{-1px} 
\end{figure}
\subsection{Dimension-Wise Attention Matrix}
To obtain the relationship between dimensions of token embedding, we first compute the attention scores between the dimensions of token embedding with matrices $\boldsymbol{Q}$ and $\boldsymbol{K}$. The scores between all dimensions pack together into a matrix named dimension-wise attention matrix, which is important to construct tensor representation in the next section. The dimension-wise attention matrix $\boldsymbol{S}$ = $\boldsymbol{Q}^{T}\boldsymbol{K}$ is expressed as follows:
\begin{equation}
\label{score}
\begin{aligned}
\boldsymbol{S}_{ij} = \sum_{n=1}^{N} \boldsymbol{Q}_{ni} \boldsymbol{K}_{nj} 
\end{aligned}
\end{equation} 
where matrices $\boldsymbol{Q}$, $\boldsymbol{K}\in\mathbb{R}^{N\times{d}}$, the dimension-wise attention matrix $\boldsymbol{S} \in\mathbb{R}^{d\times{d}}$, $N$ is the length of the sequence, and $d$ is the dimension of the token embedding.
% Each value of attention matrix $\boldsymbol{S^e}$ represents the attention scores or weights between two dimensions by solving the sum of the inner product of the hidden state of all tokens in the two dimensions.
In order to better describe the meaning of the dimension-wise attention matrix , it is necessary to propose Corollary 1. 
\begin{corollary}
\label{corollary1}
In the encoder part, if we standardize and centralize the input for each head, the dimension-wise attention matrix is equivalent to the covariance matrix with the multiplication of two coefficient matrices. It can be expressed as follows:

\begin{equation}
\label{equal1}
\begin{aligned}
\boldsymbol{S} = ({\boldsymbol{W}^q})^T\boldsymbol{CW}^k
\end{aligned}
\end{equation}
where $\boldsymbol{C}$ is the covariance matrix, ${\boldsymbol{W}^q}$ and $\boldsymbol{W}^k$ are coefficient matrices.
\end{corollary}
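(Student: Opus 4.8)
The plan is to unwind the definitions of $\boldsymbol{Q}$ and $\boldsymbol{K}$ in terms of the layer input and the per-head projection matrices, and then to recognize the resulting Gram matrix of the input as the sample covariance matrix. In the encoder, the attention is self-attention, so for a fixed head the query and key are produced from a \emph{common} input $\boldsymbol{X}\in\mathbb{R}^{N\times d}$ by linear projections: $\boldsymbol{Q}=\boldsymbol{X}\boldsymbol{W}^q$ and $\boldsymbol{K}=\boldsymbol{X}\boldsymbol{W}^k$, where $\boldsymbol{W}^q,\boldsymbol{W}^k$ are that head's query/key weight matrices. Substituting into $\boldsymbol{S}=\boldsymbol{Q}^T\boldsymbol{K}$ gives $\boldsymbol{S}=(\boldsymbol{W}^q)^T(\boldsymbol{X}^T\boldsymbol{X})\boldsymbol{W}^k$. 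Equivalently, working from the index form \eqref{score}, expand $\boldsymbol{Q}_{ni}=\sum_m\boldsymbol{X}_{nm}\boldsymbol{W}^q_{mi}$ and $\boldsymbol{K}_{nj}=\sum_l\boldsymbol{X}_{nl}\boldsymbol{W}^k_{lj}$, reorder the three summations, and isolate the factor $\sum_{n=1}^N\boldsymbol{X}_{nm}\boldsymbol{X}_{nl}=(\boldsymbol{X}^T\boldsymbol{X})_{ml}$; this gives the same matrix identity entrywise.

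Next I would read $\boldsymbol{X}^T\boldsymbol{X}$ statistically. Treating the $N$ rows of $\boldsymbol{X}$ as observations of a $d$-dimensional vector, the sample covariance matrix is $\boldsymbol{C}=\frac{1}{N}\sum_{n=1}^N(\boldsymbol{x}_n-\bar{\boldsymbol{x}})(\boldsymbol{x}_n-\bar{\boldsymbol{x}})^T$ (or with $\frac{1}{N-1}$), where $\boldsymbol{x}_n$ is the $n$-th row and $\bar{\boldsymbol{x}}$ the column mean. Centralizing the head input forces $\bar{\boldsymbol{x}}=\boldsymbol{0}$, so the mean subtraction is vacuous and $\boldsymbol{C}=\frac{1}{N}\boldsymbol{X}^T\boldsymbol{X}$; standardizing fixes the per-coordinate scale, so $\boldsymbol{C}$ is genuinely the covariance (correlation) matrix among embedding dimensions rather than merely a rescaled second-moment matrix. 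Hence $\boldsymbol{X}^T\boldsymbol{X}=N\boldsymbol{C}$, and plugging back yields $\boldsymbol{S}=N(\boldsymbol{W}^q)^T\boldsymbol{C}\boldsymbol{W}^k$. The scalar $N$ (and whatever fixed normalization convention one adopts for $\boldsymbol{C}$) is absorbed into the two coefficient matrices — e.g. replacing $\boldsymbol{W}^q$ by $\sqrt{N}\,\boldsymbol{W}^q$ and $\boldsymbol{W}^k$ by $\sqrt{N}\,\boldsymbol{W}^k$ — which leaves exactly $\boldsymbol{S}=(\boldsymbol{W}^q)^T\boldsymbol{C}\boldsymbol{W}^k$, i.e. the covariance matrix sandwiched between two coefficient matrices, as claimed.

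The only delicate point — and the step I would be most careful about — is the bookkeeping: making explicit that "standardize and centralize" is applied to the head input $\boldsymbol{X}$ and not to $\boldsymbol{Q}$ or $\boldsymbol{K}$ after projection (it is the centering of $\boldsymbol{X}$ that makes the Gram matrix coincide with the covariance and not with an uncentered moment matrix), and deciding once and for all whether the $\tfrac1N$ factor lives in $\boldsymbol{C}$ or in the coefficient matrices so that the stated equality holds exactly rather than only up to an unnamed constant. Everything else is a routine rearrangement of matrix products; the argument is carried out per head and relies on the encoder (self-attention, no causal masking) assumption only through the fact that $\boldsymbol{Q}$ and $\boldsymbol{K}$ share the input $\boldsymbol{X}$.
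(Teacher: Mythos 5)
Your proposal is correct and follows what is evidently the paper's intended argument (the proof itself is deferred to supplementary material not reproduced here, but the statement admits essentially one natural derivation): write $\boldsymbol{Q}=\boldsymbol{X}\boldsymbol{W}^q$ and $\boldsymbol{K}=\boldsymbol{X}\boldsymbol{W}^k$ for the shared head input $\boldsymbol{X}$, so that $\boldsymbol{S}=(\boldsymbol{W}^q)^T(\boldsymbol{X}^T\boldsymbol{X})\boldsymbol{W}^k$, and use the centralization/standardization hypothesis to identify $\boldsymbol{X}^T\boldsymbol{X}$ with the covariance matrix up to a scalar. Your explicit handling of the $N$ (or $N-1$) normalization factor and your remark that the centering must be applied to $\boldsymbol{X}$ rather than to $\boldsymbol{Q}$ or $\boldsymbol{K}$ are exactly the right points of care, and if anything make the argument tighter than the informal statement requires.
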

\begin{proof}
The proof can be found in Supplementary Materials \uppercase\expandafter{\romannumeral1}. 
\end{proof}
In Transformer, the token-wise attention computes the attention matrix by the inner product between each token-pair. It concerns the local correlation between tokens in a sequence. On the other hand, the dimension-wise attention matrix is equivalent to the covariance matrix which describes the linear correlation of feature vectors that represent the semantic space. Compared with the token-wise attention matrix, our method learns the feature representation of each token from a global perspective. In other words, It uses the entire context to encode each token which is crucial in long text modeling tasks. In addition,the size of the dimension-wise attention matrix is relatively lower.

\subsection{KR Product Tensor Representation}
% In order to represent both the token-wise and dimension-wise attention information, 
In order to fuse the interactions between different dimensions and retain the same output representation of the token-wise attention, we use the dimension-wise attention matrix $\boldsymbol{S}$ and matrix $\boldsymbol{V}$ to construct a third-order tensor $\mathcal{\boldsymbol{X}}$. In the process of constructing the third-order tensor, we use the \textit{Khatri-Rao} product~\cite{li2013some}. The specific construction process is as follows:
\begin{equation}
\label{KR}
\begin{aligned}
\boldsymbol{\mathcal{X}} = \boldsymbol{S} \odot \boldsymbol{V}=  {[\boldsymbol{s}_{1} \otimes \boldsymbol{v}_{1}, \boldsymbol{s}_{2} \otimes \boldsymbol{v}_{2}, \ldots, \boldsymbol{s}_{d} \otimes \boldsymbol{v}_{d}]} 
\end{aligned}
\end{equation} 
where $\boldsymbol{\mathcal{X}} \in \mathbb{R}^{N \times d \times d}$, $\boldsymbol{S} \in \mathbb{R}^{d \times d}$, $\boldsymbol{V} \in \mathbb{R}^{N \times d}$, and $\otimes$ is the tensor product. $\boldsymbol{s}_i \in \mathbb{R}^{d}$ and $\boldsymbol{v}_i \in \mathbb{R}^{N}$ are column vectors from matrices $\boldsymbol{S}$ and $\boldsymbol{V}$, respectively. In Figure~\ref{Model}(left), it is a schematic diagram about the \textit{KR} process. The vector ${\boldsymbol{s}_i}$ and ${\boldsymbol{v}_i}$ couple a matrix ( denoted as ${\boldsymbol{s}_i} \otimes {\boldsymbol{v}_i}$ ) by tensor product. Each element of vector ${\boldsymbol{s}_i}$ can be multiplied with each element of vector ${\boldsymbol{v}_i}$, which models more combinations between the two kinds of vectors. The third-order tensor $\boldsymbol{\mathcal{X}}$ can be made by $d$ matrices (${\boldsymbol{s}_i} \otimes {\boldsymbol{v}_i}$), following the steps above. In experiments, we also compute the dimension-wise attention matrix with softmax function to add model nonlinearity.

Based on Eq.~\ref{score} (dimension-wise attention matrix) and Eq.~\ref{KR} (KR product tensor representation), we give a formula of dimension-wise attention as follows:

\begin{equation}
\label{tensor-representation}
\begin{aligned}
\boldsymbol{\mathcal{X}}_{ijk} = f(\boldsymbol{S}_{jk})\boldsymbol{V}_{ik} = \sum_{n=1}^{N}f(\boldsymbol{Q}_{nj}\boldsymbol{K}_{nk})\boldsymbol{V}_{ik}
\end{aligned}
\end{equation} 
where $i \in [N]$ , $j,k \in [d]$, and $f$ means the normalized function (i.e., softmax or scaled factor).

We find that there are two representations (explicit representation and implicit representation) about the output of dimension-wise attention. In explicit representation, the output of dimension-wise attention is represented by the weighted sum of coefficients under a single basis vector of semantic space. The token-wise attention also is the explicit representation. The final output of token-wise attention is represented linearly by the weighted sum of the coefficients under one basis vector. In implicit representation, the output of dimension-wise attention is represented by the weighted sum of coefficients under all basis vectors of semantic space. 

These representation are proved in the following corollary. 
% In the explicit representation, the output of attention only uses one basis vector of semantic space to  . In the implicit representation, the output of attention uses all basis vectors to encode the token from a global perspective. 
The corollary can guide us to choose the appropriate convolution method in feature extraction. We can combine explicit representation and implicit representation to obtain a more efficient final attention output. More details about explicit representation and implicit representation can be seen in Supplementary Materials \uppercase\expandafter{\romannumeral2}. 
% We find the tensor $\boldsymbol{\mathcal{X}}$ not only contains dimension-wise features but also represents the token-wise attention information that refers to the weighted sum of coefficients of different tokens under the same base vector in the paper, as is proved in following the corollary.  
\begin{corollary}
\label{corollary2}
\textcircled{1} By summing over the third-order tensor $\boldsymbol{\mathcal{X}}_{ijk}$ according to the second index $j$, we can obtain a matrix $\boldsymbol{X} \in \mathbb{R}^{N \times d}$, which can be seen as the explicit representation. Both the matrix $\boldsymbol{X}$ and the output of token-wise attention can be represented by the weighted sum of coefficients of different tokens under the same basis vector (e.g., $\boldsymbol{e}_k$):
\begin{equation}
\label{xik}
\begin{aligned}
\boldsymbol{X}_{ik} &= \sum_{j=1}^{d} \boldsymbol{\mathcal{X}}_{ijk} 
% \iff Attention(\boldsymbol{Q, K, V})\\
= \sum_{r=1}^{d} (\delta_r | {\boldsymbol{e}_k)}
\end{aligned}
\end{equation}
\textcircled{2} By summing over the third-order tensor $\boldsymbol{\mathcal{X}}_{ijk}$ according to the third index $k$, we can obtain another matrix $\boldsymbol{X} \in \mathbb{R}^{N \times d}$, which can be seen as the implicit representation. The matrix $X$ can be represented by the weighted sum of coefficients of the token under different basis vectors ($\boldsymbol{e}_1, \ldots, \boldsymbol{e}_d$): 

\begin{equation}
\label{xij}
\begin{aligned}
\boldsymbol{X}_{ij} &=\sum_{k=1}^{d} \boldsymbol{\mathcal{X}}_{ijk} =\sum_{r=1}^{d} (\zeta_r | {\boldsymbol{e}_{r})}
\end{aligned}
\end{equation}
where 
$(\delta_1, \ldots, \delta_d)$ are the different coefficients under the same basis vector $\boldsymbol{e}_k$, and
$(\zeta_1, \ldots, \zeta_d)$ are the coefficients under different basis vector $(\boldsymbol{e}_1, \ldots, \boldsymbol{e}_d)$.

\end{corollary}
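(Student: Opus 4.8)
The plan is to prove both parts by direct manipulation of Eq.~\ref{tensor-representation}, exploiting the fact that $\boldsymbol{\mathcal{X}}_{ijk}=f(\boldsymbol{S}_{jk})\boldsymbol{V}_{ik}$ is separable: the $(j,k)$ factor comes entirely from the dimension-wise attention matrix $\boldsymbol{S}$, and the $(i,k)$ factor entirely from $\boldsymbol{V}$. Throughout I will regard the $i$-th row of any $N\times d$ output as a vector in the semantic space $\mathbb{R}^{d}$ with standard basis $\{\boldsymbol{e}_1,\dots,\boldsymbol{e}_d\}$, so that an entry $\boldsymbol{X}_{ik}$ is literally the coefficient of that row vector on $\boldsymbol{e}_k$. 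With this dictionary, ``a coefficient under the basis vector $\boldsymbol{e}_k$'' just means a real number we agree to attach to $\boldsymbol{e}_k$, and the pairing $(\,\cdot\,|\boldsymbol{e}_k)$ records that attachment.

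For part \textcircled{1}, summing Eq.~\ref{tensor-representation} over $j$ gives $\boldsymbol{X}_{ik}=\sum_{j=1}^{d}f(\boldsymbol{S}_{jk})\boldsymbol{V}_{ik}=\boldsymbol{V}_{ik}\sum_{j=1}^{d}f(\boldsymbol{S}_{jk})$, so every one of the $d$ summands still carries only the $k$-th coordinate $\boldsymbol{V}_{ik}$; setting $\delta_r:=f(\boldsymbol{S}_{rk})\boldsymbol{V}_{ik}$ and declaring $(\delta_r|\boldsymbol{e}_k):=\delta_r$ yields Eq.~\ref{xik}. To see that this has the same \emph{form} as the token-wise output, I would then expand $\boldsymbol{S}_{rk}=\sum_{n=1}^{N}\boldsymbol{Q}_{nr}\boldsymbol{K}_{nk}$ and, in the scaled-factor case, regroup as $\boldsymbol{X}_{ik}=\tfrac{1}{\sqrt{d}}\sum_{n=1}^{N}\big(\sum_{r=1}^{d}\boldsymbol{Q}_{nr}\big)\boldsymbol{K}_{nk}\boldsymbol{V}_{ik}$, i.e.\ a weighted sum over the tokens $n$ of quantities that are all tied to the single basis vector $\boldsymbol{e}_k$ --- structurally the same as the token-wise output, whose $(i,k)$ entry is $\sum_{n}\boldsymbol{A}_{in}\boldsymbol{V}_{nk}$ with $\boldsymbol{A}=\mathrm{softmax}(\boldsymbol{Q}\boldsymbol{K}^{T}/\sqrt{d})$. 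This establishes the explicit representation.

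For part \textcircled{2}, summing Eq.~\ref{tensor-representation} over $k$ gives $\boldsymbol{X}_{ij}=\sum_{k=1}^{d}f(\boldsymbol{S}_{jk})\boldsymbol{V}_{ik}$; now each summand uses a different coordinate $\boldsymbol{V}_{ik}$ of the \emph{same} token $i$, i.e.\ token $i$'s coefficient along a different basis vector $\boldsymbol{e}_k$. Setting $\zeta_r:=f(\boldsymbol{S}_{jr})\boldsymbol{V}_{ir}$ and $(\zeta_r|\boldsymbol{e}_r):=\zeta_r$ gives Eq.~\ref{xij} directly, and the point to emphasize is the contrast with \textcircled{1}: here the $d$ terms range over all $d$ distinct basis vectors $\boldsymbol{e}_1,\dots,\boldsymbol{e}_d$, so no single-basis-vector reduction is available --- this is the implicit representation.

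The computations themselves are routine; the substantive work is the bookkeeping. The first delicate point is making the pairing notation $(\,\cdot\,|\boldsymbol{e}_k)$ precise enough that Eqs.~\ref{xik} and \ref{xij} are genuine identities rather than schematic statements, which forces one to state explicitly which index has been contracted and which basis vector the leftover coefficient is assigned to. The second, and the one I expect to be the real obstacle, is the comparison in part \textcircled{1}: to justify calling $\boldsymbol{X}$ ``the weighted sum of coefficients of different \emph{tokens} under the same basis vector'' one must push the token sum $\sum_{n}$ inside $\boldsymbol{S}_{rk}$ out past $\sum_{r}$, absorb $\sum_{r}\boldsymbol{Q}_{nr}$ into a per-token weight, and then argue this reproduces the token-wise form; verifying that this regrouping still holds --- at least as an interpretation --- when $f$ is the softmax rather than the scaled factor is the step that needs the most care.
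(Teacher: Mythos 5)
Your proposal is correct and follows what is essentially the only available route: direct summation of the separable tensor $\boldsymbol{\mathcal{X}}_{ijk}=f(\boldsymbol{S}_{jk})\boldsymbol{V}_{ik}$ over $j$ (leaving every summand proportional to $\boldsymbol{V}_{ik}$, hence attached to the single basis vector $\boldsymbol{e}_k$) and over $k$ (leaving summands spread across $\boldsymbol{V}_{i1},\dots,\boldsymbol{V}_{id}$, hence across all basis vectors), which is the same direct expansion the paper defers to its supplementary material. Your identifications $\delta_r=f(\boldsymbol{S}_{rk})\boldsymbol{V}_{ik}$ and $\zeta_r=f(\boldsymbol{S}_{jr})\boldsymbol{V}_{ir}$ reproduce Eqs.~\ref{xik} and \ref{xij} exactly, and you correctly flag that the further regrouping into a per-token weighted sum (for the comparison with token-wise attention) is only a literal identity in the scaled-factor case, not under softmax.
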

\begin{proof}
The proof can be found in Supplementary Materials \uppercase\expandafter{\romannumeral3}. 
\end{proof}

\subsection{Masked Dimension-Wise Attention}
\label{MaskMechanism}
We modify the dimension-wise attention with the mask mechanism in the decoder structure to prevent leftward information flow. In the encoder structure, the dimension-wise attention matrix $\boldsymbol{S}$ is computed by using all the tokens of the entire sequence at the same time (in Eq.~\ref{score}). However, we should avoid using the information of tokens after the current prediction token when calculating the dimension-wise attention score. Different from dimension-wise attention matrix $\boldsymbol{S}$, the masked dimension-wise attention tensor $\boldsymbol{\mathcal{S}}$ uses an upper triangular matrix to mask the tokens behind the current token, which means that the dimension attention score of token $i$ depends only on the known tokens at positions less than $i$. The masked dimension-wise attention tensor $\boldsymbol{\mathcal{S}}$ is expressed as follows:
\begin{equation}
\label{mask score}
\begin{aligned}
\boldsymbol{\mathcal{S}}_{ijk} = \sum_{n=1}^{N} \boldsymbol{Q}_{ni} \boldsymbol{K}_{nj}\boldsymbol{M}_{nk} 
\end{aligned}
\end{equation} 

where $\boldsymbol{\mathcal{S}} \in \mathbb{R}^{d \times d \times N}$ is a third-order tensor. $\boldsymbol{M} \in \mathbb{R}^{N \times N}$ is an upper triangular matrix used to mask off dimension information of the tokens after the current token in calculating dimensions attention score of the current token. The graphic process of the tensor $\boldsymbol{\mathcal{S}}$ is as follows in Figure ~\ref{mask}.

\begin{figure}[tbp]
\centering
\includegraphics[width=3.0in]{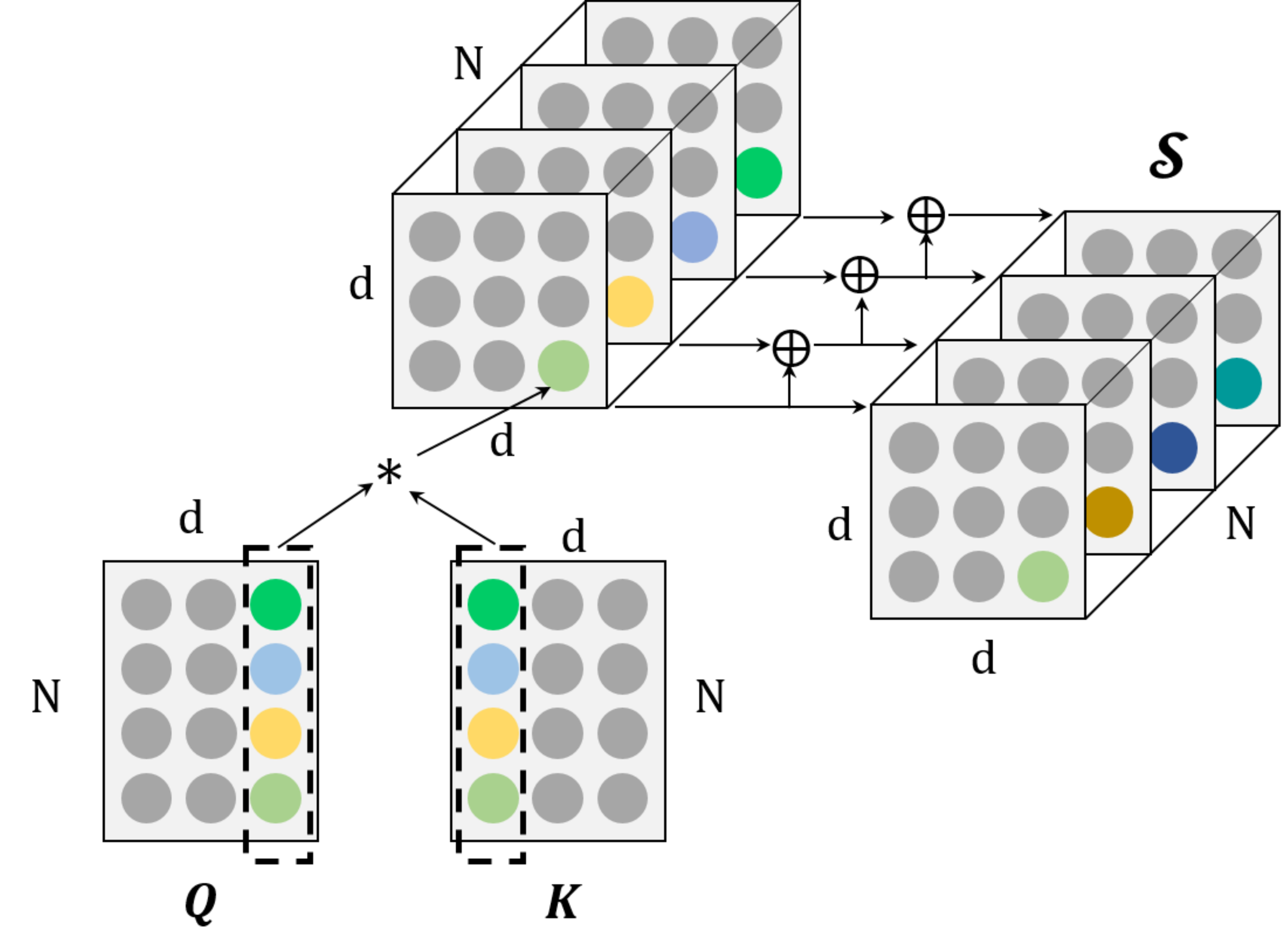}
\caption{The dimension-wise attention tensor $\boldsymbol{\mathcal{S}}$ with mask mechanism. $\ast$ is hadamard product, each column vector of $\boldsymbol{Q}$ and $\boldsymbol{K}$ can generate $N$-dimensional vector by hadamard product, and $\oplus$ is the sum calculation between two matrices.}
\label{mask}
\end{figure}

 We also use the masked dimension-wise attention tensor $\boldsymbol{\mathcal{S}}$ and the $\boldsymbol{V}$ matrix to construct a third-order tensor $\boldsymbol{\mathcal{X}}$. The third-order tensor $\boldsymbol{\mathcal{X}}$ is expressed as follows:
\begin{equation}
\label{mask atention}
\begin{aligned}
\boldsymbol{\mathcal{X}}_{ijk} = \boldsymbol{\mathcal{S}}_{jki} \boldsymbol{V}_{ik}
\end{aligned}
\end{equation} 
where ${\boldsymbol{\mathcal{X}} \in \mathbb{R}^{N \times d \times d}}$, ${\boldsymbol{\mathcal{S}} \in \mathbb{R}^{d \times d \times N}}$, and ${\boldsymbol{V} \in \mathbb{R}^{N \times d }}$. We take broadcast hadamard product between each slice matrix of third-order tensor (sliced by
the last index) and the corresponding row vector of matrix  $\boldsymbol{V}_{i,:}$ to obtain the final third-order tensor $\boldsymbol{\mathcal{X}}$.

\subsection{Feature Extraction}
\label{extraction}
For the third-order tensor $\boldsymbol{\mathcal{X}} \in \mathbb{R}^{N \times d \times d}$, 
we perform sliding convolution filtering along the dimension $N$ of the third-order tensor to obtain final output, which can serve as the input to the next layer network in TensorCoder. The output can be formulated as follows:
\begin{equation}
\label{feature}
\begin{aligned}
\boldsymbol{O}_{ij} = \sum_{m=1}^{d} \boldsymbol{W}_{jm} \boldsymbol{\mathcal{X}}_{ijm} 
\end{aligned}
\end{equation} 
where $\boldsymbol{O} \in \mathbb{R}^{N \times d}$, and  $\boldsymbol{W} \in \mathbb{R}^{d \times d }$ is a parameter matrix, similar to convolution filter. As shown in  Figure~\ref{Model}(right), we first use the filter to get a hidden vector in the slice matrix that is from the third-order tensor splitting in the dimension $N$. The vector usually is called a feature map in computer vision. When filtering in the slice matrix, each column uses different parameters. Then we slide the filter across $\boldsymbol{\mathcal{X}}$ along the dimension ($N$) to obtain final output $\boldsymbol{O}$, which is a collection of different feature maps. In the process of sliding, the filter is shared in different slice matrices.

In Transformer, the multi-head attention uses multiple groups of parameters(i.e., \boldsymbol{$Q, K, V$}) to obtain multiple groups of attention output matrices. Then, these matrices are concatenated and again projected to final output by full connection. In our model, TensorCoder constructs one or more third-order tensors with corresponding groups of parameters and then use multi convolution filters to get different attention outputs. Then, TensorCoder also can get final output from these attention outputs by concat and full connection method, similar to Transformer. 

% ,where the third-order tensor $X$ contains the dimensional attention information using only the dimensional information of the tokens that precedes its position.
\vspace{-0.3cm}
\subsection{Analysis of Complexity}
\label{compressanalysis}
% \textbf{Complexity} 
From Eq.~\ref{attention}, we know the time complexity of the token-wise attention is ${O}(N^2d)$. $N$ is the length of a sequence, and $d$ is the dimension of token embedding. The complexity analysis of our model is introduced from two aspects: first, in the encoder part of TensorCoder, the time complexity of dimension-wise attention matrix (in Eq.~\ref{score}), tensor representation by KR product (in Eq.~\ref{KR}) , and feature extraction (in Eq.~\ref{feature}) all are ${O}(Nd^2)$. Therefore, the time complexity of TensorCoder's encoder part is ${O}(Nd^2)$. In some tasks (i.e., Masked language modeling), they only use the encoder part, TensorCoder has a great advantage compared with Transformer; Second, in the decoder of TensorCoder, the complexity of dimension-wise attention tensor $\mathcal{S}$ (in Eq.~\ref{mask score}) is $O(N^2d^2)$, the complexity of third-order tensor construction process (in Eq.~\ref{mask atention}) and final feature extraction both are ${O}(Nd^2)$. Therefore,  the complexity of TensorCoder is ${O}(Nd^2 + N^2d^2)$ in the decoder part.   
% \vspace{-0.2cm}

% \section{Related Work}
%  Transformer has attracted researchers' attention since it is proposed because it can allow modeling of dependencies regardless of their distance in the long sequence. The Transformer  architecture~\cite{vaswani2017attention} based solely on the self-attention mechanism has been used in many tasks, such as in the natural language processing task~\cite{raffel2019exploring,khandelwal2019sample}, music task~\cite{huang2019music} and image task~\cite{parmar2018image}.
% %  and it also shows excellent performance in these task.
% Furthermore, it plays a basic component in the pre-trained language models that obtain new SoTA on some NLP tasks, such as Bert~\cite{devlin2018bert}, XLNet~\cite{yang2019xlnet}, and Albert~\cite{lan2020albert}.

\section{Comparison with Recent Related Works}
Transformer has some defects in the practical application, such as high attention mechanism complexity, huge
model parameters. Many works have been proposed to solve these problems.
% So many Transformer variants have been proposed one after another. % To solve transformer is limited by the fixed-length context in the language modeling settings, transformer-xl~\cite{dai2019transformer}uses a segment-level recurrence mechanism and a novel positional encoding scheme to enable learning dependency beyond a fixed length without disrupting temporal coherence.
% A large number of parameters in transformer practical application~\cite{shazeer2018mesh} brings huge resources and time consumption, which limits the development of the transformer. 
Ma \textit{et al.}~\cite{ma2019a} propose multi-linear attention to largely compress the model parameters while obtain performance improvements. However, the multi-linear attention  can only be applied to the encoder structure of the Transformer, but cannot replace the decoder's attention mechanism because it lacks the mask mechanism.
% Universal Transformers~\cite{dehghani2019universal} propose a self-attentive recurrent sequence model that uses recurrent models to replace the stacking structure of Transformer. The attention mechanism has not been changed in Universal Transformers. 
Many Research works try to reduce  the complexity of the attention mechanism  to allow Transformer to handle longer sequences. For example, Sparse Transformer~\cite{Rewon2019sparse} introduces a sparse factorizations of the attention matrix method that changes the attention  complexity from $O(N^2)$ to $O(N\sqrt{N})$, where $N$ is the length of the sequence. The Reformer~\cite{kitaev2020reformer} replaces dot-product attention by using the locality-sensitive hashing, and reduces the attention complexity to be $O(N\log N)$. Although these methods have achieved relatively good results, the attention complexity of these methods is still not linear in sequence length. Linear transformer ~\cite{katharopoulos2020transformers} discards the softmax operation in token-wise attention to achieve time and memory complexity $O(N)$. It uses a feature representation for the query matrix, keys matrix, and values matrix, which simplifies the attention mechanism by making use of the associative property of matrix multiplication.

\section{Experiments}
In order to verify the effectiveness of TensorCoder, we carry out two NLP tasks named masked language modeling (MLM) and neural machine translation (NMT). They are considered touchstone and challenging in the NLP field. In the experiment, we take the open-source Transformer model architecture and make changes on it by using dimension-wise attention to replace the multi-head attention of Transformer. 

% Other details (such as Hardware and Optimization) can be found in Supplementary Materials \uppercase\expandafter{\romannumeral4}.

In the experiment, we choose two models as the baseline, Transformer and Tensorized Transformer~\cite{ma2019a}, respectively. Tensorized Transformer is a typical model constructed by the tensor method in Transformer, so that we make it as one baseline. In the experiment, TensorCoder is with a single attention head. 'TensorCoder-1conv' and 'Transformer-1head' both use a single head. 'TensorCoder-8conv' uses eight convolution kernels to obtain eight attention output matrices $\boldsymbol{O}$ (in Eq.~\ref{feature}) from a third-order tensor. 'Transformer-8head' obtains eight attention output matrices by multi-head mechanism (in Eq.~\ref{muti-head-attention}). They both get the same amount of attention outputs, which is a reasonable comparison. Some detailed description can be seen in Section~\ref{extraction}. The Sparse Transformer~\cite{Rewon2019sparse} carried out for natural images, text (character-level), and raw audio. Similar to Sparse Transformer, the Reformer~\cite{kitaev2020reformer} ran experiments on the imagenet64 and enwik8-64K tasks. Our experiments are carried out for word-level language model and neural machine translation, which are not suitable for Sparse Transformer and Reformer.
% A reproducible experiment code is in the Supplementary Material \uppercase\expandafter{\romannumeral5}.
% Complete code\footnote{https://github.com/szhangtju/The-compression-of-Transformer} for running experiments have be released, and the key code which is about our method can be found in Supplementary Materials F.
% \vspace{-0.2cm}
\subsection{Evaluation on Masked Language Modeling}
Masked language modeling (MLM) \cite{devlin2018bert} is the task that masks a certain percentage of the input tokens randomly and then predicts only those masked token. It is often referred to as a Cloze task\cite{taylor1953cloze} in the literature, which is different from the standard conditional language models that can only be trained left-to-right or right-to-left. Similar to BERT~\cite{devlin2018bert}, we mask 15\% of all tokens in each sequence at random in our experiments. Additionally, we do not always replace 'masked' words with the actual [MASK] token. In these 'masked' words, there are three  processing methods: 80\% to replace the 'masked' word with the [MASK] token, 10\% to replace the 'masked' word with a random word, 10\% to keep the word unchanged. The purpose of these methods is to bias the model towards the actual observed word.

We chose two datasets:  PTB  and WikiText-103. PTB has $929k$ training tokens, $73k$ validation words, and $82k$ test words. It is widely used dataset in language model learning. WikiText-103 dataset contains 267,735 unique tokens. The dataset is well suited for models that can take advantage of long term dependencies. It also features a far larger vocabulary and retains the original case, punctuation, and numbers. It contains $103M$ training tokens from $28.5k$ articles, no processing is needed other than replacing newlines with <eos> tokens.

Models are evaluated based on Negative Log-Likelihood Loss (NLL), which  is also called multi-class cross entropy. The smaller the loss is, the better the model performance. To compare the complexity of the model, we measure the efficiency of the model in terms of the number of floating-point operations (FLOPs)~\cite{molchanov2017pruning}. The FLOPs are mainly the sum of multiplication and addition times. The MLM task only uses the encoder structure of Transformer, so that we use dimension-wise attention to replace multi-head attention in each layer of encoder, while other parts remain the same.

% \begin{table*}[ht]\small
% \centering 
% \caption{Results (PPL) and model parameters with state-of-the-art results on One-Billion. Tensorized Transformer is our model. The core-1 is that the model use Single-block term tensor. Analogously, the core-2 is that two block term tensor is used.}
% \begin{tabular}{ccccc}
% \toprule[1pt]
% \textbf{Model} & \textbf{Params} & \textbf{Test PPL}\\
% \midrule[0.5pt]
% RNN-1024+9 Gram~\cite{chelba2013one} & 20B & 51.3\\
% LSTM-2018-512~\cite{jozefowicz2016exploring} & 0.83B & 43.7 \\
% GCNN-14 bottleneck~\cite{dauphin2017language} &--& 31.9 \\
% LSTM-8192-1024+CNN Input~\cite{jozefowicz2016exploring}  &1.04B&30.0 \\
% High-Budget MoE~\cite{shazeer2017outrageously} &5B &28.0 \\
% LSTM+Mos~\cite{yang2017breaking} & 113M & 37.10 \\
% \hline
% Transformer+adaptive input~\cite{baevski2018adaptive} & 0.46B & 23.7 \\
% Transformer-XL Base~\cite{dai2019transformer} &0.46B& 23.5 \\
% Transformer-XL Large~\cite{dai2019transformer} & 0.8B& 21.8 \\
% \hline 
% Tensorized Transformer core-$1$ &0.16B& 20.5 \\
% Tensorized Transformer core-$2$ &0.16B& \textbf{19.5} \\
% \bottomrule[1pt]
% \end{tabular}
% \label{Tabel1}
% % \vspace{-11px}
% \end{table*}

% \subsection{Results and Details}

\begin{figure}[tbp]\small
    \centering
    \subfloat[PTB Loss]{
    \begin{minipage}[t]{0.5\linewidth}
    \centering
    \includegraphics[width=2.5in]{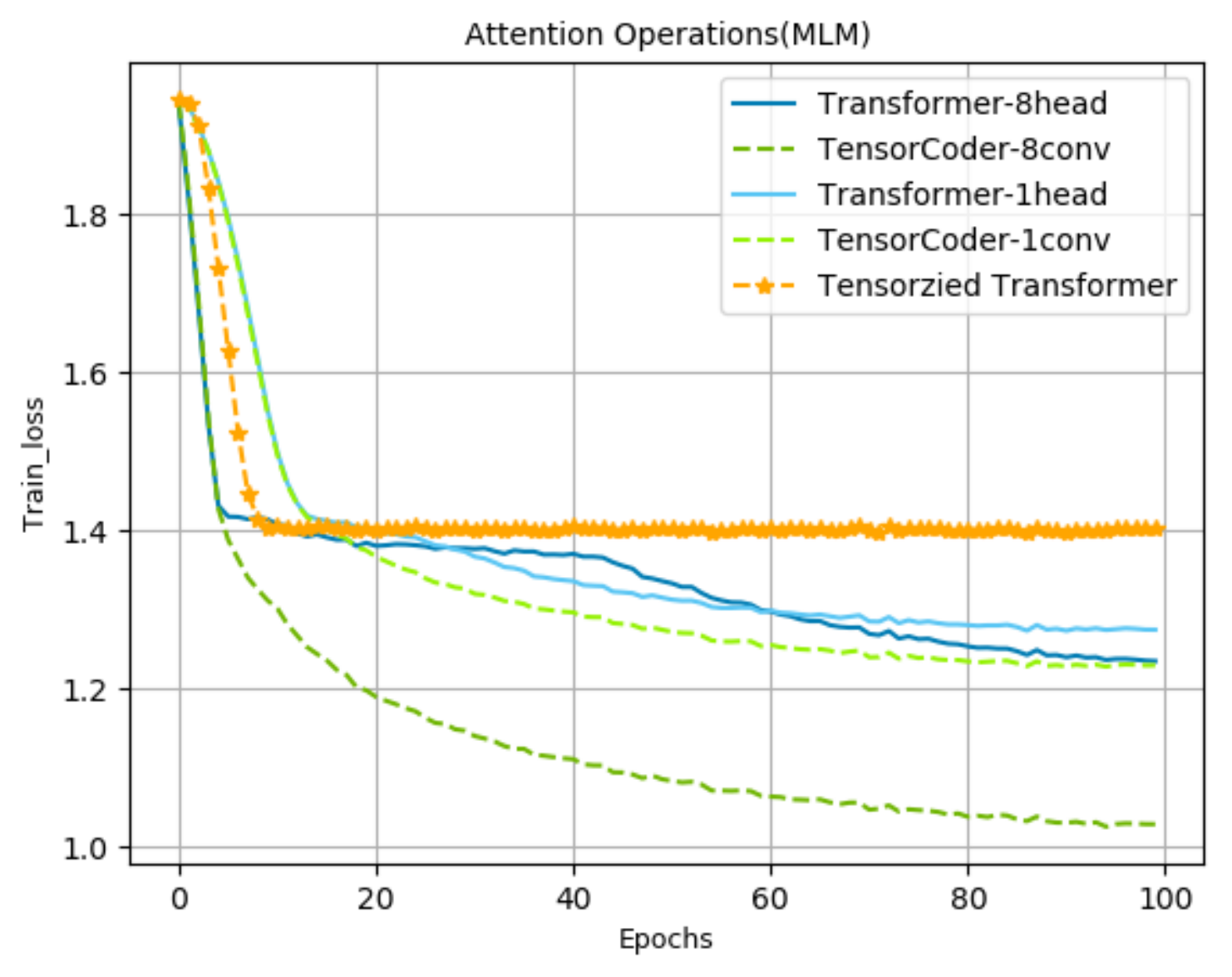}
    \end{minipage}%
    }
    \subfloat[Wiki-103 Loss]{
    \begin{minipage}[t]{0.5\linewidth}
    \centering
    \includegraphics[width=2.5in]{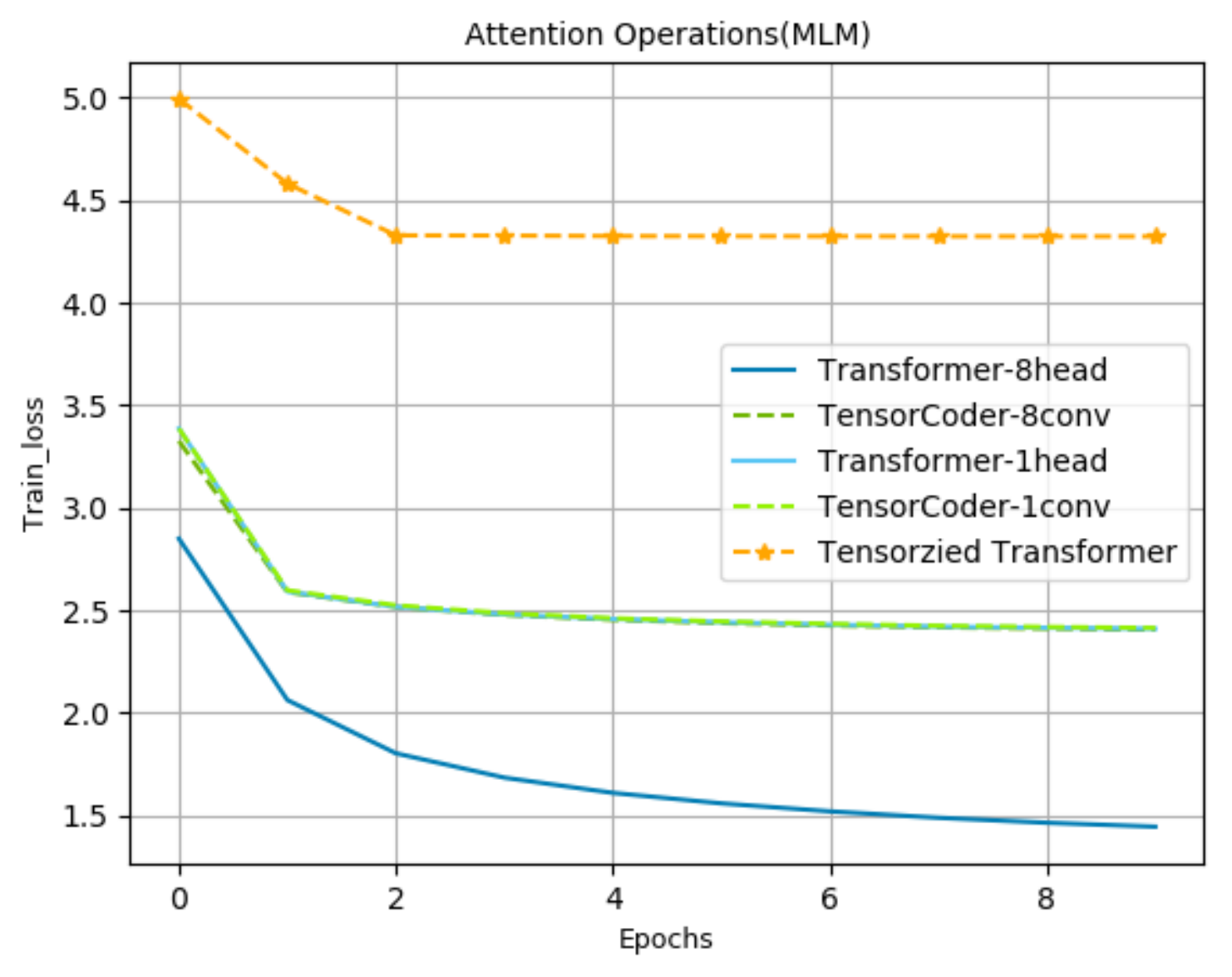}
    \end{minipage}%
    }
    \caption{ Loss of TensorCoder vs Transformer on PTB and Wiki-103 dataset.}
    \label{Loss}
\end{figure}
\begin{table}[tbp]\small
\renewcommand\arraystretch{1.2}
  \centering
  \begin{tabular}{cccccc}
    \toprule[1pt]
    \multirow{2}{*}{\textbf{Model}}& \multicolumn{2}{c}{\textbf{PTB}} & \multicolumn{2}{c}{\textbf{Wiki-103}}\\
    \cline{2-5}
    &valid Loss& Attention GFLOPs & valid Loss&Attention GFLOPs\\
    \hline
    Transformer-1head & 1.277 & 0.035 &2.420 & 0.047\\
    TensorCoder-1conv & 1.232 & 0.033 & 2.422 & 0.035\\
    Tensorized Transformer~\cite{ma2019a} &1.401 & 2.419 & 4.320 & 7.543\\
    \hline
   Transformer-8head~\cite{vaswani2017attention} & 1.242 & 0.284 &1.425 & 0.378 \\
    TensorCoder-8conv & 1.081& 0.093 & 2.401 &  0.103 \\
    \bottomrule[1pt]
  \end{tabular}
  \caption{Valid Loss and FLOPs on PTB and Wiki-103.}
  \label{MLM_FLOPs}
\end{table}
% \vspace{-0.cm}
%  Loss and FLOPs of TensorCoder vs Transformer on PTB. In right (b), the result '3.05x' is the ratio of FLOPs by 'Transformer-8head / TensorCoder-8conv'; the result '1.08x' is the ratio by 'Transformer-1head / TensorCoder-1conv'.The loss of (a) and the mark point of (b) is the same configuration, where $N=100, d_{head} = 40, L=3, d_{model}=320$.

The performance and efficiency of our model (TensorCoder) in the masked language modeling are shown in Figure~\ref{Loss} and Table \ref{MLM_FLOPs}. We use short sequences (the length is 100) on the smaller PTB dataset, and use long sequences (the length is 160) on the larger wiki-103 dataset.
% We can see that our model has better results in both two datasets, which shows that our model is all capable of modeling long and short sequences.
In PTB dataset, the two convolution methods of TensorCoder have lower loss compared with the Transformer, and the model complexity is lower, the maximum is $3.05$ times. In Wiki-103 dataset, the loss and FLOPs of TensorCoder are much smaller than Tensorized Transformer. 
% TensorCoder-1conv and Transformer-1head have similar loss and FLOPs. 
% In the two datasets, because of the limitation of sequence length, the FLOPs of Tensorcoder-1conv have no great advantage over Transformer-1head. 
More complexity comparison and hyperparameter settings can be found in Supplementary Materials \uppercase\expandafter{\romannumeral4} {model complexity comparison}.

% Loss and FLOPs of TensorCoder vs Transformer on WikiText-103. In right (b), the result `3.65x' is the ratio of FLOPs by `Transformer-8head / TensorCoder-8conv'; the result `1.32x' is the ratio by `Transformer-1head / TensorCoder-1conv'.The loss of (a) and the mark point of (b) is the same configuration, where $N=160, d_{head} = 40, L=4, d_{model}=150$.

% Table~\ref{result} and Table~\ref{Tabel1} show that our model get the lower PPL than other models in three datasets. An exciting observation is that our model has much fewer parameters.
% The model of Transformer-XL+TT~\cite{khrulkov2019tensorized} is a recent compression model with Tensor Train to compress the input embedding layers only. Sparse Transformer~\cite{Rewon2019sparse} use the method of sparse attention matrix to compress Transformer model. The results in Table~\ref{result} show that compared with Transformer-XL+TT, our method has much fewer parameters, and better language modeling performance. 
% These results verify that our model (i.e., Multi-linear attention) is effective in language modeling tasks, and has performed well for the model compression.
% Other details (such as hyperparameters and Hardware) can be found in Supplementary Materials E.

%that this method have more parameters than ours, but PPL increased in test and validation datasets than Transformer-XL~\cite{dai2019transformer}.
\subsection{Evaluation on Neural Machine Translation}
On the machine translation tasks, we report results on WMT16 English to German(En-De) and IWSLT16 English to German (En-De)~\cite{cettoloEtAl:EAMT2012} benchmark datasets. 
% For WMT14 En-DE, the training data consists of 4.5 million sentence pairs, We validated on newstest2013 and tested on newstest2014. We employed SentencePiece~\footnote{https://github.com/google/sentencepiece} to the sentences, with a 37K joint source and target vocabulary.
For IWSLT EN-De, we used some pre-processing steps to data cleaning. Specifically, for the dataset we used the training data that consists of 197k sentence pairs. We validated on tst2013 and tested on tst2014. Sentences were encoded using SentencePiece~\footnote{https://github.com/google/sentencepiece} , which has a shared source-target vocabulary of about 32k tokens. For WMT16 En-DE, We had a 37K joint source and target vocabulary.
For evaluation, we used beam search with a beam size of 5 and a length penalty $\alpha$=0.6. The BLEU scores use multi-bleu~\footnote{https://github.com/moses-smt/mosesdecoder/blob/master/scripts/generic/multi-bleu.perl} to compute.

In neural machine translation, we have replaced the attention with dimension-wise attention in each layer, with
other parts remaining the same. The FLOPs of Table~\ref{WMt16} and Table~\ref{IWSLT16} is the attention layers' FLOPs in encoder structure. The length of sequence is set to the maximum length of the sentence in the training set. '$*$' indicates that the result is our own implementation, '$-$' indicates no reported results in that setting.

\begin{table}[htbp]\small
\begin{minipage}[tbp]{0.5\linewidth}
  \renewcommand{\arraystretch}{1.2}
  \centering
  \setlength{\tabcolsep}{0.6mm}{
  \begin{tabular}{cccc}
  \toprule[1pt]    
    \textbf{Model} & \textbf{BLEU} & \textbf{FLOPs}\\
    \hline
    Linguistic Input Featurec~\cite{sennrich2016linguistic} & 28.4 & - \\
    STT Translation System~\cite{williams2016edinburgh} & 30.6 & -\\
    Tensorized Transformer~\cite{ma2019a} &34.1 & 0.242 \\
    \hline
   Transformer-1head~\cite{vaswani2017attention} & 30.4* & 0.02 \\
    TensorCoder-1conv & 30.7& 0.021 \\
    \hline
    Transformer-8head~\cite{vaswani2017attention} & 33.5* & 0.163 \\
    TensorCoder-8conv & 33.1& 0.058 \\
    \bottomrule[1pt]
  \end{tabular}}
  \caption{Bleu and FLOps in WMT16}
  \label{WMt16}
  \end{minipage}
\begin{minipage}[htbp]{0.5\linewidth}
  \renewcommand{\arraystretch}{1.2}
  \centering
  \setlength{\tabcolsep}{0.8mm}{
  \begin{tabular}{cccc}
    \toprule[1pt]
    \textbf{Model} & \textbf{BLEU} & \textbf{FLOPs}\\
    \hline
    NAT-FT~\cite{gu2018non} & 26.5 & - \\
    imitate NAT~\cite{wei2019imitation} & 31.8 & -\\
    Iterative Refinement NAT~\cite{lee2018deterministic} &30.1 & - \\
    \hline
    Transformer-1head~\cite{vaswani2017attention} & 30.3* & 0.054 \\
    TensorCoder-1conv & 30.1 & 0.051 \\
    \hline
    Transformer-8head~\cite{vaswani2017attention} & 33.1* & 0.432 \\
    TensorCoder-8conv & 32.5 & 0.144 \\
    \bottomrule[1pt]
  \end{tabular}}
  \caption{Bleu and FLOps in IWSLT16}
  \label{IWSLT16}
  \end{minipage}
  \vspace{-0.1cm}
\end{table}

In Table~\ref{WMt16} and Table~\ref{IWSLT16}, we select Transformer and Tensorized Transformer as the baseline. In WMT16 dataset, the bleu score of TensorCoder (1conv and 8conv) are 30.7 and 33.1, which are similar performance over Transformer. However, TensorCoder has lower FLOPs, the FLOPs of Transformer is 3.5 times over our model under similar performance. In IWSLT16, The TensorCoder aslo has lower FLOPs, the FLOPs of TensorCoder only is 1/3 of Transformer.

% 4 times

% \subsection{Discussion}
% We have showed the results in language modeling and neural machine translation tasks using the Multi-linear attention. Our experimental design is based on the better experimental results(i.e.,perplexity), then to compares the model parameters. Our method can greatly compression the model parameters in some datasets and achieve comparable or better results with the state of the art results. Regarding the rationale for the improvements, besides the alleviation of overfitting by reducing parameters, another reason is that our method captures more information than the original Transformer. In Corollary~\ref{corollary}, we prove that the output of the original attention can be represented by summing over the $3$-order tensor. In Figure~\ref{Model}, we use a concat function over these matrices from tensor splitting. The operation of concat models all values in the $3$-order tensor, and thus captures more information than the operator of sum. In order to further illustrate the effectiveness of our model, we also add a lot of groups comparison experiments. These results and further analyses can be found in Supplementary Materials E.4.

\section{Conclusion}
In the paper, we propose an encoder-decoder language model, namely TensorCoder, which replaces the token-wise attention of Transformer with the dimension-wise attention. Compared with token-wise attention, the complexity of dimension-wise attention is $O(Nd^2)$. When $N$ increases, the complexity of dimension-wise attention increases linearly rather than quadratically.
% TensorCoder have 
% TensorCoder also model dimension-wise and token-wise attention information.
The experimental results show that TensorCoder is competent in long-sequence tasks and helps to improve the efficiency of pre-trained language models in resource-limited environments.

% \section{Acknowledgement}
% This work is supported in part by the state key development program of China (grant No. 2017YFE0111900), Natural Science Foundation of China (grant No. U1636203, 61772363, 2018YFC0831704), and the European Unions Horizon 2020 research and innovation programme under the Marie SkodowskaCurie grant agreement No.721321. 

\section{Impact Statement}

\emph{Transformer} which is a fundamental \emph{Sequence-to-Sequence (Seq2Seq)} deep learning model is now widely applied to many \emph{natural language processing (NLP)} tasks due to its capacity of modeling long-range dependencies and its parallelizability.   Despite its marvelous successes on many NLP tasks,
% (esp., \emph{Neural Machine Translation (NMT)} and \emph{Pre-trained Language Models}),
the limitations of Transformer are also obvious. Firstly, the attention mechanism in Transformer involves only the token-wise attention and is not aware of the interactions between different dimensions (i.e., features). Secondly, the computation time of the token-wise attention in Transformer is quadratic to the sequence length which renders it not scalable to long-sequence tasks and not applicable to devices with limited resources. 

Motivated by this, we proposed \emph{TensorCoder} in this paper. Instead of directly computing the token-wise attention, our TensorCoder deploys dimension-wise attention which allows the token representations learned fuses the interactions between different dimensions and also captures the same output representation of the token-wise attention. And remarkably, the time complexity of TensorCoder is linear to the sequence length and the benefits are two-folds. Firstly, it could be efficiently applied to many long-sequence tasks and even potentially the document-level NLP tasks (e.g., the document-level Neural Machine Translation and the document-level Neural Language Generation such as fiction generation). Secondly, due to the reduction of the computation and memory consumption compared with Transformer, our TensorCoder could be deployed in the scenarios where the computation resource is limited (e.g., edge devices such as mobile phones, tablets, etc.). 

In the future, we will further study the universality of TensorCoder. Specifically, we will study the relationship between the dimensionality of the dimension-wise attention and the sequence length and provide an insight and guidance on how to set the dimensionality for a given sequence length. 

%Transformers and their derived pre-trained language model have been widely recognized and applied. They lead the development trend of Natural language processing, and gradually apply to other fields. But large-scale pre-trained language models also bring many limitations, such as hardware resources and time cost. Our work focuses on reducing the complexity of the Transformer structure to obtain a new framework with low complexity and high performance. 

%TensorCoder is a unified encoder-decoder framework. It has a lighter attention mechanism, which is linear to the length of sequence. So it can handle long sequence tasks with a low resource consumption, even be used in music, images, and other fields that have longer sequence. And it can simultaneously model the explicit and implicit attention information of the text sequence. Therefore it can be used to improve some NLP tasks. TensorCoer have a encoder-decoder framework, which is a general framework and can be used easily in various forms. Also, TensorCoder can be used as the core module of a pre-trained language model to develop a language model that can handle long text and multiple tasks.

%We will carry out further research to verify the universality of TensorCoder. In addition, we recommend that researchers pay attention to the relationship between the dimensions of dimension-wise attention and the length of the sequence. As the length of the sequence processed increases, how should the dimensions be selected, so that the performance of TensorCoder is not average?

\medskip
\small

\bibliographystyle{plain}

%\bibliography{neurips_2020}

\begin{thebibliography}{10}

\bibitem{al-rfou2019character}
Rami {Al-Rfou}, Dokook {Choe}, Noah {Constant}, Mandy {Guo}, and Llion {Jones}.
\newblock Character-level language modeling with deeper self-attention.
\newblock {\em AAAI 2019 : Thirty-Third AAAI Conference on Artificial
  Intelligence}, 33(1):3159--3166, 2019.

\bibitem{cettoloEtAl:EAMT2012}
Mauro Cettolo, Christian Girardi, and Marcello Federico.
\newblock Wit$^3$: Web inventory of transcribed and translated talks.
\newblock In {\em Proceedings of the 16$^{th}$ Conference of the European
  Association for Machine Translation (EAMT)}, pages 261--268, Trento, Italy,
  May 2012.

\bibitem{devlin2018bert}
Jacob Devlin, Ming{-}Wei Chang, Kenton Lee, and Kristina Toutanova.
\newblock {BERT:} pre-training of deep bidirectional transformers for language
  understanding.
\newblock In Jill Burstein, Christy Doran, and Thamar Solorio, editors, {\em
  Proceedings of the 2019 Conference of the North American Chapter of the
  Association for Computational Linguistics: Human Language Technologies,
  {NAACL-HLT} 2019, Minneapolis, MN, USA, June 2-7, 2019, Volume 1 (Long and
  Short Papers)}, pages 4171--4186. Association for Computational Linguistics,
  2019.

\bibitem{gu2018non}
Jiatao {Gu}, James {Bradbury}, Caiming {Xiong}, Victor~O.K. {Li}, and Richard
  {Socher}.
\newblock Non-autoregressive neural machine translation.
\newblock In {\em ICLR 2018 : International Conference on Learning
  Representations 2018}, 2018.

\bibitem{irie2019language}
Kazuki {Irie}, Albert {Zeyer}, Ralf {Schlüter}, and Hermann {Ney}.
\newblock Language modeling with deep transformers.
\newblock In {\em Interspeech 2019}, 2019.

\bibitem{junczys-dowmunt2019microsoft}
Marcin {Junczys-Dowmunt}.
\newblock Microsoft translator at wmt 2019: Towards large-scale document-level
  neural machine translation.
\newblock In {\em Proceedings of the Fourth Conference on Machine Translation
  (Volume 2: Shared Task Papers, Day 1)}, pages 225--233, 2019.

\bibitem{katharopoulos2020transformers}
Angelos {Katharopoulos}, Apoorv {Vyas}, Nikolaos {Pappas}, and Francois
  {Fleuret}.
\newblock Transformers are rnns: Fast autoregressive transformers with linear
  attention.
\newblock In {\em ICML 2020: 37th International Conference on Machine
  Learning}, 2020.

\bibitem{kitaev2020reformer}
Nikita {Kitaev}, Lukasz {Kaiser}, and Anselm {Levskaya}.
\newblock Reformer: The efficient transformer.
\newblock In {\em ICLR 2020 : Eighth International Conference on Learning
  Representations}, 2020.

\bibitem{lan2020albert}
Zhenzhong {Lan}, Mingda {Chen}, Sebastian {Goodman}, Kevin {Gimpel}, Piyush
  {Sharma}, and Radu {Soricut}.
\newblock Albert: A lite bert for self-supervised learning of language
  representations.
\newblock In {\em ICLR 2020 : Eighth International Conference on Learning
  Representations}, 2020.

\bibitem{lee2018deterministic}
Jason {Lee}, Elman {Mansimov}, and Kyunghyun {Cho}.
\newblock Deterministic non-autoregressive neural sequence modeling by
  iterative refinement.
\newblock In {\em EMNLP 2018: 2018 Conference on Empirical Methods in Natural
  Language Processing}, pages 1173--1182, 2018.

\bibitem{li2013some}
Na~{Li}, Stefan {Kindermann}, and Carmeliza {Navasca}.
\newblock Some convergence results on the regularized alternating least-squares
  method for tensor decomposition.
\newblock {\em Linear Algebra and its Applications}, 438(2):796--812, 2013.

\bibitem{luong2013better}
Thang {Luong}, Richard {Socher}, and Christopher {Manning}.
\newblock Better word representations with recursive neural networks for
  morphology.
\newblock In {\em Proceedings of the Seventeenth Conference on Computational
  Natural Language Learning}, pages 104--113, 2013.

\bibitem{ma2019a}
Xindian {Ma}, Peng {Zhang}, Shuai {Zhang}, Nan {Duan}, Yuexian {Hou}, Ming
  {Zhou}, and Dawei {Song}.
\newblock A tensorized transformer for language modeling.
\newblock In {\em NeurIPS 2019 : Thirty-third Conference on Neural Information
  Processing Systems}, pages 2232--2242, 2019.

\bibitem{molchanov2017pruning}
Pavlo {Molchanov}, Stephen {Tyree}, Tero {Karras}, Timo {Aila}, and Jan
  {Kautz}.
\newblock Pruning convolutional neural networks for resource efficient
  inference.
\newblock In {\em ICLR 2017 : International Conference on Learning
  Representations 2017}, 2017.

\bibitem{peters2018deep}
Matthew~E {Peters}, Mark {Neumann}, Mohit {Iyyer}, Matt {Gardner}, Christopher
  {Clark}, Kenton {Lee}, and Luke {Zettlemoyer}.
\newblock Deep contextualized word representations.
\newblock In {\em NAACL HLT 2018: 16th Annual Conference of the North American
  Chapter of the Association for Computational Linguistics: Human Language
  Technologies}, volume~1, pages 2227--2237, 2018.

\bibitem{ramachandran2017unsupervised}
Prajit {Ramachandran}, Peter~J. {Liu}, and Quoc~V. {Le}.
\newblock Unsupervised pretraining for sequence to sequence learning.
\newblock In {\em Proceedings of the 2017 Conference on Empirical Methods in
  Natural Language Processing}, pages 383--391, 2017.

\bibitem{Rewon2019sparse}
Alec~Radford Rewon~Child, Scott~Gray and Ilya Sutskever.
\newblock Generating long sequences with sparse transformer.
\newblock {\em arXiv preprint arXiv:1904.10509}, 2019.

\bibitem{sennrich2016linguistic}
Rico {Sennrich} and Barry {Haddow}.
\newblock Linguistic input features improve neural machine translation.
\newblock In {\em Proceedings of the First Conference on Machine Translation:
  Volume 1, Research Papers}, volume~1, pages 83--91, 2016.

\bibitem{sundermeyer2012lstm}
Martin {Sundermeyer}, Ralf {Schlüter}, and Hermann {Ney}.
\newblock Lstm neural networks for language modeling.
\newblock In {\em INTERSPEECH}, pages 194--197, 2012.

\bibitem{sutskever2014sequence}
Ilya {Sutskever}, Oriol {Vinyals}, and Quoc~V. {Le}.
\newblock Sequence to sequence learning with neural networks.
\newblock In {\em Advances in Neural Information Processing Systems 27}, pages
  3104--3112, 2014.

\bibitem{taylor1953cloze}
Wilson~L. {Taylor}.
\newblock Cloze procedure: A new tool for measuring readability.
\newblock {\em Journalism \& Mass Communication Quarterly}, 30(30):415--433,
  1953.

\bibitem{vaswani2017attention}
Ashish Vaswani, Noam Shazeer, Niki Parmar, Jakob Uszkoreit, Llion Jones,
  Aidan~N Gomez, {\L}ukasz Kaiser, and Illia Polosukhin.
\newblock Attention is all you need.
\newblock In {\em Advances in neural information processing systems}, pages
  5998--6008, 2017.

\bibitem{wei2019imitation}
Bingzhen {Wei}, Mingxuan {Wang}, Hao {Zhou}, Junyang {Lin}, and Xu~{Sun}.
\newblock Imitation learning for non-autoregressive neural machine translation.
\newblock In {\em ACL 2019 : The 57th Annual Meeting of the Association for
  Computational Linguistics}, pages 1304--1312, 2019.

\bibitem{williams2016edinburgh}
Philip {Williams}, Rico {Sennrich}, Maria {Nadejde}, Matthias {Huck}, Barry
  {Haddow}, and Ondrej {Bojar}.
\newblock Edinburgh's statistical machine translation systems for wmt16.
\newblock In {\em Proceedings of the First Conference on Machine Translation:
  Volume 2, Shared Task Papers}, volume~2, pages 399--410, 2016.

\bibitem{zhang2019improving}
Biao {Zhang}, Ivan {Titov}, and Rico {Sennrich}.
\newblock Improving deep transformer with depth-scaled initialization and
  merged attention.
\newblock In {\em 2019 Conference on Empirical Methods in Natural Language
  Processing}, pages 898--909, 2019.

\bibitem{zhang2019a}
Lipeng {Zhang}, Peng {Zhang}, Xindian {Ma}, Shuqin {Gu}, Zhan {Su}, and Dawei
  {Song}.
\newblock A generalized language model in tensor space.
\newblock {\em AAAI 2019 : Thirty-Third AAAI Conference on Artificial
  Intelligence}, 33(1):7450--7458, 2019.

\bibitem{zhou2015answer}
Xiaoqiang {Zhou}, Baotian {Hu}, Qingcai {Chen}, Buzhou {Tang}, and Xiaolong
  {Wang}.
\newblock Answer sequence learning with neural networks for answer selection in
  community question answering.
\newblock In {\em Proceedings of the 53rd Annual Meeting of the Association for
  Computational Linguistics and the 7th International Joint Conference on
  Natural Language Processing (Volume 2: Short Papers)}, volume~2, pages
  713--718, 2015.

\end{thebibliography}
%\bibliographystyle{plain}
\end{document}